\documentclass{article}

\PassOptionsToPackage{numbers, compress}{natbib}
%

\usepackage[final]{nips_2017}
\pagestyle{empty}
\usepackage{graphicx}

\usepackage[utf8]{inputenc} 
\usepackage[T1]{fontenc}    
\usepackage{hyperref}       
\usepackage{url}            
\usepackage{booktabs}       
\usepackage{amsfonts}       
\usepackage{nicefrac}       
\usepackage{microtype}      
\usepackage{graphicx} 
\usepackage{epsfig} 
\usepackage{amsmath, amsthm, amssymb}
\usepackage{algorithm}
\usepackage{algorithmic}
\usepackage{multicol}

\newtheorem{theorem}{Theorem}

\newtheorem{proposition}[theorem]{Proposition}
\newtheorem{corollary}[theorem]{Corollary}
\newtheorem{definition}[theorem]{Definition}

\title{Unsupervised Submodular Rank Aggregation on Score-based Permutations}

%

\author{
  Jun Qi 	 \\
  Electrical Engineering \\
  University of Washington\\
  Seattle, WA 98105 \\
  \And
  Xu Liu			\\
  Institute of Industrial Science	\\
  The University of Tokyo		\\
  Japan, 153-8505	\\
   \And
   Javier Tejedor 	\\
   University San Pablo CEU	\\
  Madrid, Spain 	\\
  \And
  Shunsuke Kamijo	\\
  Institute of Industrial Science	\\
  The University of Tokyo	\\
  Japan, 153-8505	\\
}

\begin{document}

\maketitle

\begin{abstract}
Unsupervised rank aggregation on score-based permutations, which is widely used in many applications, has not been deeply explored yet. This work studies the use of submodular optimization for rank aggregation on score-based permutations in an unsupervised way. Specifically, we propose an unsupervised approach based on the Lovasz Bregman divergence for setting up linear structured convex and nested structured concave objective functions. In addition, stochastic optimization methods are applied in the training process and efficient algorithms for inference can be guaranteed. The experimental results from Information Retrieval, Combining Distributed Neural Networks, Influencers in Social Networks, and Distributed Automatic Speech Recognition tasks demonstrate the effectiveness of the proposed methods. 
\end{abstract}

\section{Introduction}

Unsupervised rank aggregation is the task of combining multiple permutations on the same set of candidates into one ranking list with a better permutation on candidates. Specifically, there are $K$ different permutations $\{\textbf{x}_{1}, \textbf{x}_{2}, ..., \textbf{x}_{K}\}$ in total, where the elements $\{X_{1j}, X_{2j}, ..., X_{Nj}\}$ in the ranking list $\textbf{x}_{j}$ denote either relative orders or numeric values for the candidates. Most of the previous work on unsupervised rank aggregation focuses on social choice theory \cite{RankWeb}, where relative orders for candidates are assigned to the elements in a ranking list and a consensus among all the ranking lists is the result of pursuit. Numerous aggregated methods, such as the unsupervised methods based on the local Kemeny optimality \cite{voting} or distance-based Mallows models \cite{Klementiev08}, aim to find a consensus in order to minimize the sum of distances between the permutations and the consensus. 

However, in the framework of rank aggregation on score-based permutations, the elements in the ranking lists represent numerical values and a combined list $\hat{\textbf{x}}$ with the aggregated scores is used to obtain the relative orders for candidates by sorting the values in $\hat{\textbf{x}}$. Thus, the methods used in the rank aggregation on order-based permutations are not simply generalized to the unsupervised rank aggregation on score-based permutations. To the best of our knowledge, only the Borda count-based unsupervised learning algorithm for rank aggregation (ULARA) \cite{Klementiev08} is particularly designed for the unsupervised rank aggregation on score-based permutations, although some naive methods like averaging and majority vote can also be applied. 

The Lovasz Bregman (LB) divergence for rank aggregation on the score-based permutations is initially introduced in \cite{rishabh_lbd}. The LB divergence is derived from the generalized Bregman divergence parameterized by the Lovasz extension of a submodular function, where a submodular function can be defined via the diminishing return property \cite{edmonds}. Specifically, a function $f: 2^{V} \rightarrow R_{+}$ is said to be submodular if for any item $a\in V\backslash B$ and subsets $A\subseteq B \subseteq V$, $f$ satisfies the inequality $f(\{a\}\cup A) - f(A) \ge f(\{a\}\cup B) - f(B)$. Many discrete optimization problems involving submodular functions can be solved in polynominal time, e.g., exact minimization or approximate maximization \cite{fujishige}. In addition, the Lovasz extension of a submodular function ensures a convex function \cite{Lovasz}. 

Although the introduction of the LB divergence to rank aggregation on score-based permutations is briefly discussed in \cite{rishabh_lbd},  the existing formulation of the LB divergence for rank aggregation and the related algorithms are limited to a supervised manner. Thus, the formulations of the LB divergence for the unsupervised rank aggregation on score-based permutations as well as the related algorithms are still lacking. Since the LB divergence is capable of measuring the divergence between a score-based permutation and an order-based permutation, this work applies the LB divergence to the unsupervised rank aggregation on the score-based permutations. In addition, efficient algorithms associated with the LB divergence framework are proposed accordingly. 

The significance of unsupervised rank aggregation on score-based permutations is that ground truths or relevance scores associated with candidates are required for the supervised cases, but they are quite expensive to obtain in practice. For example, in the field of information retrieval \cite{submod_bandits}, it is difficult to annotate the relevance scores for all the retrieval documents, which prevents the use of the learning to rank methods. Similarly, in the field of speech recognition \cite{jun16} or machine translation \cite{mt}, the ground truth is normally unknown and thus multiple $n$-best hypotheses, which are generated by distributed deep learning models, have to be combined in an unsupervised way. Thus, it is necessary to design theories and algorithms for the unsupervised rank aggregation on score-based permutations. 

\section{Preliminaries}
\subsection{The Definition of the LB Divergence}
The LB divergence as a utility function was firstly proposed by \cite{rishabh_bregman} and this can be used for measuring the divergence between a score-based permutation and an order-based permutation. The related definitions for the LB divergence are briefly summarized as follows:

\begin{definition}
For a submodular function $f$ and an order-based permutation $\sigma$, then there is a chain of sets $\phi = S_{0}^{\sigma} \subseteq S_{1}^{\sigma} \subseteq ... \subseteq S_{N}^{\sigma} = V$, with which the vector $h_{\sigma}^{f}$ is defined as (\ref{eq:hs}).
\begin{equation}
\label{eq:hs}
h_{\sigma}^{f}(\sigma(i)) = f(S^{\sigma}_{i}) - f(S^{\sigma}_{i-1}), \forall i = 1, 2, ..., N
\end{equation}
\end{definition}
\begin{definition}
For a submodular function $f$ and a score-based permutation $\textbf{x}$, we define a permutation $\sigma_{\textbf{x}}$ such that $\textbf{x}[\sigma_{\textbf{x}}(1)] \ge \textbf{x}[\sigma_{\textbf{x}}(2)]\ge ... \ge \textbf{x}[\sigma_{\textbf{x}}(N)]$, and a chain of sets $\phi = S_{0}^{\sigma_{\textbf{x}}} \subseteq S_{1}^{\sigma_{\textbf{x}}} \subseteq ... \subseteq S_{N}^{\sigma_{\textbf{x}}} = V$, with which the vector $h_{\sigma_{\textbf{x}}}^{f}$ is defined as (\ref{eq:hss}).
\begin{equation}
\label{eq:hss}
h_{\sigma_{\textbf{x}}}^{f}(\sigma_{\textbf{x}}(i)) = f(S^{\sigma_{\textbf{x}}}_{i}) - f(S^{\sigma_{\textbf{x}}}_{i-1}), \forall i = 1, 2, ..., N
\end{equation}
\end{definition}
\begin{definition}
Given a submodular function $f$ and its associated Lovasz extension $\hat{f}$, we define the LB divergence $d_{\hat{f}}(\textbf{x} || \sigma)$ for measuring the divergence of permutations between a score-based permutation $\textbf{x}$ and an order-based permutation $\sigma$. The LB divergence is shown as (\ref{eq:lvz}).
\begin{equation}
\label{eq:lvz}
\begin{split}
d_{\hat{f}}(\textbf{x} || \sigma) &= <\textbf{x}, h_{\sigma_{\textbf{x}}}^{f} - h_{\sigma}^{f}>	\\
\end{split}
\end{equation}
\noindent where both $h_{\sigma}^{f}$ and $h_{\sigma_{\textbf{x}}}^{f}$ are defined via definitions $1$ and $2$ respectively, $\phi$ denotes an empty set, and $V = \{1, 2, ..., N\}$ refers to the ground set.
\end{definition}

\subsection{The LB divergence for the NDCG loss function}
Next, we will study how to apply the LB divergence to obtain the Normalized Discounted Cumulative Gain (NDCG) loss function which is used in the unsupervised rank aggregation on score-based permutations. 

\begin{definition}
Given an order-based permutation $\sigma$ with candidates from a ground set $V = \{1, 2, ..., N\}$, and a set of relevance scores $\{r(1), r(2), ..., r(N)\}$ associated with $\sigma$, the NDCG score for $\sigma$ is defined as (\ref{eq:ndcg}), where $D(\cdot)$ denotes a discounted term and $Z$ refers to a normalization term as defined in (\ref{eq:norm}) where $\pi$ is the ground truth. Accordingly, the NDCG loss function $L(\sigma)$ is defined as (\ref{eq:loss}).
\begin{equation}
\label{eq:ndcg}
NDCG(\sigma) = \frac{1}{Z} \sum\limits_{i=1}^{N}r(\sigma(i))D(i)
\end{equation}
\begin{equation}
\label{eq:loss}
\begin{split}
L(\sigma) &= 1 - NDCG(\sigma)	 = \frac{1}{Z} \sum\limits_{i=1}^{N} r(\pi(i))D(i) - r(\sigma(i))D(i)  \\
\end{split}
\end{equation}
\begin{equation}
\label{eq:norm}
Z = \sum\limits_{i=1}^{N} r(\pi(i))D(i)
\end{equation}
\end{definition}

\begin{corollary}
For a submodular function $f(X) = g(|X|)$, the LB divergence $d_{\hat{f}}(\textbf{x} || \sigma)$ associated with $f$ is derived as (\ref{eq:ndcg_f}). 
\begin{equation}
\label{eq:ndcg_f}
d_{\hat{f}}(\textbf{x} || \sigma) = \sum\limits_{i=1}^{N} \textbf{x}(\sigma_{\textbf{x}}(i))\delta_{g}(i) - \sum\limits_{i=1}^{N}\textbf{x}(\sigma(i))\delta_{g}(i)
\end{equation}
where $\delta_{g}(i) = g(i) - g(i - 1)$, $g(\cdot)$ is a concave function, $|X|$ denotes a cardinality function, and $\sigma_{\textbf{x}}$ and $\sigma$ refer to the score-based and order-based permutations respectively. 
\end{corollary}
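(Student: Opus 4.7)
The plan is to unfold the definition of $d_{\hat{f}}(\textbf{x}\,\|\,\sigma)$ from Definition~3 and then exploit the specific structure of cardinality-based submodular functions to simplify the marginal-gain vectors $h_{\sigma}^{f}$ and $h_{\sigma_{\textbf{x}}}^{f}$ into expressions depending only on the discrete derivative $\delta_{g}$.

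First I would rewrite the inner product $\langle \textbf{x},\, h_{\sigma_{\textbf{x}}}^{f} - h_{\sigma}^{f}\rangle$ as a sum over ground-set indices, and then re-index each of the two pieces by position rather than by element: since $\sigma$ and $\sigma_{\textbf{x}}$ are bijections on $V=\{1,\dots,N\}$, summing $\textbf{x}(j)\,h_{\sigma}^{f}(j)$ over $j\in V$ is the same as summing $\textbf{x}(\sigma(i))\,h_{\sigma}^{f}(\sigma(i))$ over $i=1,\dots,N$, and similarly for the $\sigma_{\textbf{x}}$ term. This gives the clean split
\[
d_{\hat{f}}(\textbf{x}\,\|\,\sigma) \;=\; \sum_{i=1}^{N}\textbf{x}(\sigma_{\textbf{x}}(i))\,h_{\sigma_{\textbf{x}}}^{f}(\sigma_{\textbf{x}}(i)) \;-\; \sum_{i=1}^{N}\textbf{x}(\sigma(i))\,h_{\sigma}^{f}(\sigma(i)).
\]

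The key step is then to observe that, by construction of the chains $S_{0}^{\sigma}\subseteq\cdots\subseteq S_{N}^{\sigma}$ and $S_{0}^{\sigma_{\textbf{x}}}\subseteq\cdots\subseteq S_{N}^{\sigma_{\textbf{x}}}$ in Definitions~1 and 2, each set $S_{i}^{\sigma}$ (respectively $S_{i}^{\sigma_{\textbf{x}}}$) has cardinality exactly $i$, since the chain is built by adjoining one new element at each step. Therefore, for $f(X)=g(|X|)$,
\[
h_{\sigma}^{f}(\sigma(i)) = f(S_{i}^{\sigma})-f(S_{i-1}^{\sigma}) = g(i)-g(i-1) = \delta_{g}(i),
\]
and identically $h_{\sigma_{\textbf{x}}}^{f}(\sigma_{\textbf{x}}(i))=\delta_{g}(i)$. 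Substituting these values into the two summations above immediately produces the claimed identity~(\ref{eq:ndcg_f}). I would also briefly remark that the concavity of $g$ is precisely the condition that makes $f(X)=g(|X|)$ submodular, so the hypothesis is consistent with the setting of the LB divergence.

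I do not anticipate a real obstacle here: the proof is essentially a bookkeeping exercise combined with the single structural observation that, for cardinality-based $f$, the Lovasz-type marginal $h^{f}$ at position $i$ is the same scalar $\delta_{g}(i)$ regardless of which permutation is used. The only place one needs to be careful is the re-indexing step, to make sure the position index $i$ in the final formula correctly refers to the rank inside $\sigma_{\textbf{x}}$ and $\sigma$ respectively, rather than to a shared element of $V$.
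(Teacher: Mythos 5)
Your proposal is correct and follows essentially the same route as the paper's own proof: both reduce to the observation that for $f(X)=g(|X|)$ the chain sets in Definitions~1 and~2 have cardinality $i$, so every marginal gain $h_{\sigma}^{f}(\sigma(i))$ and $h_{\sigma_{\textbf{x}}}^{f}(\sigma_{\textbf{x}}(i))$ collapses to $\delta_{g}(i)=g(i)-g(i-1)$, after which the inner product in Definition~3 splits into the two claimed sums. Your version is slightly more explicit about the re-indexing from ground-set elements to rank positions, which the paper leaves implicit, but the substance is identical.
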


Obviously, it is found that $d_{\hat{f}}(\textbf{x}||\sigma) \propto L(\sigma)$, and the normalized $d_{\hat{f}}(\textbf{x} || \sigma)$ can be applied as a utility function for the NDCG loss measurement because the normalization term $Z$ is constant. In addition, the Lovasz Bregman divergence guarantees an upper bound for the NDCG loss function as shown in Proposition $6$.

\begin{proposition}
Given a score-based permutation $\textbf{x}$ and a concave function $g$, the LB divergence $d_{\hat{f}}(\textbf{x} || \sigma)$ defined as (\ref{eq:ndcg_f}) provides a constant upper bound to the NDCG loss function. Specifically, 
\begin{equation}
\label{eq:thm_ndcg}
\begin{split}
L(\sigma) &\le \frac{n}{Z}\cdot \epsilon \cdot (g(1) - g(|V|) + g(|V|-1)) \le \frac{\epsilon \cdot (g(1) - g(|V|) + g(|V|-1))}{\min\limits_{i}\textbf{x}(\sigma_{\textbf{x}}(i))\delta_{g}(i)}
\end{split}
\end{equation}
where $n$ is the number of permutation $\textbf{x}$, $\epsilon = \max_{i, j}|\textbf{x}(i) - \textbf{x}(j)|$, $Z = \sum_{i=1}^{n}\textbf{x}(\sigma_{\textbf{x}}(i))\delta_{g}(i)$ is a normalization term, and the upper bound for $L(\sigma)$ is independent of the permutation variable $\sigma$.
\end{proposition}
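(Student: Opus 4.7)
The plan is to first identify $L(\sigma)$ with a normalized LB divergence, then bound each factor of $d_{\hat{f}}$ termwise, and finally pass to the second inequality via an averaging argument.

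First I would rewrite $L(\sigma)$ in the form of (\ref{eq:ndcg_f}). Identifying the relevance $r(\cdot)$ with $\textbf{x}$ and the discount $D(i)$ with $\delta_{g}(i)$, the ground-truth permutation $\pi$ is the one achieving $NDCG = 1$; the rearrangement inequality, applied to the non-increasing sequence $\delta_{g}$ (which is non-increasing because $g$ is concave), forces $\pi = \sigma_{\textbf{x}}$. Hence $L(\sigma) = d_{\hat{f}}(\textbf{x} || \sigma)/Z$ with $Z = \sum_{i} \textbf{x}(\sigma_{\textbf{x}}(i))\delta_{g}(i)$, matching (\ref{eq:norm}).

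Next I would bound the summand $[\textbf{x}(\sigma_{\textbf{x}}(i)) - \textbf{x}(\sigma(i))]\,\delta_{g}(i)$ termwise. The score factor satisfies $|\textbf{x}(\sigma_{\textbf{x}}(i)) - \textbf{x}(\sigma(i))| \le \epsilon$ by definition of $\epsilon$. For the discount factor, monotonicity of $\delta_{g}$ gives $\delta_{g}(n) \le \delta_{g}(i) \le \delta_{g}(1)$, so
$$|\delta_{g}(i)| \le \max(\delta_{g}(1),\, -\delta_{g}(n)) \le \delta_{g}(1) - \delta_{g}(n) = g(1) - g(n) + g(n-1),$$
using the standard normalization $g(0)=0$ (consistent with $f(X) = g(|X|)$ being a non-negative submodular function with $f(\emptyset)=0$). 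Multiplying these two bounds, summing over the $n$ terms, and dividing by $Z$ yields the first inequality of (\ref{eq:thm_ndcg}). The second inequality follows from the elementary observation that the minimum of a set of numbers is at most their average: with $a_{i} := \textbf{x}(\sigma_{\textbf{x}}(i))\delta_{g}(i)$, we have $\min_{i} a_{i} \le Z/n$, i.e., $n/Z \le 1/\min_{i} a_{i}$. Independence from $\sigma$ is then immediate because every quantity on the right-hand side depends only on $\textbf{x}$, $g$, and $|V|$.

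The main technical subtlety I anticipate is the termwise bound on $|\delta_{g}(i)|$: the step $\max(\delta_{g}(1),-\delta_{g}(n)) \le \delta_{g}(1) - \delta_{g}(n)$ silently requires $\delta_{g}(1) \ge 0$ and $\delta_{g}(n) \le 0$, i.e., that the non-increasing sequence $\{\delta_{g}(i)\}$ straddles zero. This is the natural regime for a non-negative concave $g$ with $g(0)=0$, but it should be recorded explicitly to make the estimate airtight; the remaining manipulations are routine algebra and the elementary ``min $\le$ average'' bound.
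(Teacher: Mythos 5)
Your overall skeleton matches the paper's: identify $L(\sigma)$ with $\frac{1}{Z}d_{\hat f}(\mathbf{x}\,\Vert\,\sigma)$, bound the divergence by $n\epsilon(\delta_g(1)-\delta_g(|V|))$, and convert $n/Z$ into $1/\min_i \mathbf{x}(\sigma_{\mathbf{x}}(i))\delta_g(i)$ by ``min $\le$ average''. The first and third steps are fine and essentially what the paper does. The middle step, however, contains a genuine gap that you flagged but misdiagnosed as a bookkeeping condition. Your termwise bound $|\delta_g(i)|\le\delta_g(1)-\delta_g(n)$ requires $\delta_g(n)\le 0$, and this fails in exactly the regime the proposition is meant for: $D(i)=\delta_g(i)$ is a discount factor, so $g$ is an \emph{increasing} concave function (the paper instantiates it with a sigmoid), hence every $\delta_g(i)=g(i)-g(i-1)$ is positive and in particular $|\delta_g(1)|=\delta_g(1)>\delta_g(1)-\delta_g(n)$. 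Worse, the individual term $[\mathbf{x}(\sigma_{\mathbf{x}}(1))-\mathbf{x}(\sigma(1))]\,\delta_g(1)$ can genuinely attain $\epsilon\,\delta_g(1)$, so no purely termwise estimate can produce the constant $\delta_g(1)-\delta_g(n)$; the stated bound is only true because of cancellation across terms.

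The missing ingredient is the identity $\sum_i \mathbf{x}(\sigma_{\mathbf{x}}(i))=\sum_i\mathbf{x}(\sigma(i))$, i.e.\ $\sum_i[\mathbf{x}(\sigma_{\mathbf{x}}(i))-\mathbf{x}(\sigma(i))]=0$, which licenses subtracting any constant from $\delta_g$ before bounding: $d_{\hat f}(\mathbf{x}\,\Vert\,\sigma)=\sum_i[\mathbf{x}(\sigma_{\mathbf{x}}(i))-\mathbf{x}(\sigma(i))](\delta_g(i)-\delta_g(n))\le n\epsilon(\delta_g(1)-\delta_g(n))$, which is now legitimately termwise since $0\le\delta_g(i)-\delta_g(n)\le\delta_g(1)-\delta_g(n)$ by concavity. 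The paper exploits the same cancellation on the other factor: it invokes the lemma $d_{\hat f}(\mathbf{x}\,\Vert\,\sigma)\le \epsilon n\,(\max_j f(j)-\min_j f(j\mid V\setminus\{j\}))$, proved in the appendix by decomposing $\mathbf{x}=\min_j\mathbf{x}(j)\mathbf{1}+r$, observing $\langle\mathbf{1},h^f_{\sigma_{\mathbf{x}}}-h^f_\sigma\rangle=f(V)-f(V)=0$, and applying Cauchy--Schwarz to $\langle r,h^f_{\sigma_r}-h^f_\sigma\rangle$, before specializing to $f(X)=g(|X|)$. Either centering closes the gap; without one of them your argument only yields the weaker constant $g(1)-g(0)$.
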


\begin{proof}
To obtain (\ref{eq:thm_ndcg}), we firstly show that, given a monotone submodular function $f$ and any permutation $\sigma$, there is an inequality \cite{rishabh_lbd} for $d_{\hat{f}}(\textbf{x} || \sigma)$ such that
\begin{equation}
\label{eq:pre_lbd}
d_{\hat{f}}(\textbf{x} || \sigma) \le \epsilon n\cdot (\max_{j}f(j) - \min\limits_{j}f(j| V\backslash \{j\}))
\end{equation}
where $\epsilon = \max_{i, j} |\textbf{x}(i) - \textbf{x}(j)|$, and $f(j | V\backslash \{j\}) = f(V) - f(V\backslash \{j\})$ is a marginal gain. Furthermore,
by setting $f(X) = g(|X|)$, we can obtain $\max_{j}f(j) = g(1)$, and by applying the submodular diminishing return property $\min_{j}f(j|V\backslash \{j\}) = f(V) - f(V\backslash \{j\}) = g(|V|) - g(|V|-1)$. Thus, by setting $L(\sigma) = \frac{1}{Z} d_{\hat{f}}(\textbf{x}||\sigma)$, and noting that $Z \ge n\cdot \min_{i}\sigma_{\textbf{x}}(i)\delta_{g}(i)$,  the proof for (\ref{eq:thm_ndcg}) is completed.  
\end{proof}

\section{The Unsupervised Learning Frameworks}
In this section, two unsupervised learning frameworks  based on the LB divergence and the associated algorithms are proposed.

\subsection{The Linear Structured Framework}
The first unsupervised learning framework based on the LB divergence is based on a linear structure. Suppose that there are $|Q|$ queries in total and $K$ score-based permutations $\{\textbf{x}^{q}_{1}, \textbf{x}^{q}_{2}, ..., \textbf{x}^{q}_{K}\}$ associated with the query $q\in Q$. $\forall q\in Q$, it is assumed there is a random variable $\pi^{q} \in [N]$ which subsumes $N!$ possible permutations, and the LB divergence $d_{\hat{f}}(\textbf{x}_{i}^{q} || \pi^{q})$ computes the divergence between $\textbf{x}^{q}_{i}$ and $\pi^{q}$. The problem is to study how to assign the weights $w_{i}$ to $d_{\hat{f}}(\textbf{x}_{i}^{q} || \pi^{q}), \forall i\in [1,...,K]$. The objective function of the linear structure is formulated as (\ref{eq:linear_obj}), where $E_{\pi^{q}}[\cdot]$ denotes an expectation term over a random variable $\pi^{q}$, and $\lambda$ denotes a regularization term.   
\begin{equation}
\label{eq:linear_obj}
\begin{split}
\min\limits_{\textbf{w}} \frac{1}{|Q|} &\sum\limits_{q\in Q} E_{\pi^{q}}[\sum\limits_{i=1}^{K} w_{i}d_{\hat{f}}(\textbf{x}_{i}^{q} || \pi^{q})] + \frac{\lambda}{2}\sum\limits_{i=1}^{K}w_{i}^{2} \\
&s.t.,  \sum\limits_{i=1}^{K} w_{i} = 1, \hspace{2mm} w_{i} \ge 0
\end{split}
\end{equation}

On the other hand, taking gradients with respect to $w_{i}$, we obtain $\nabla_{i}$ as (\ref{eq:gradient}). During the learning process, the Stochastic Gradient Descent (SGD) $\nabla_{i}^{sgd}$ is applied to update $w_{i}$ as follows: 
\begin{equation}
\label{eq:gradient}
\nabla_{i} = \frac{1}{|Q|} \sum\limits_{q\in Q} E_{\pi^{q}}[d_{\hat{f}}(x_{i}^{q} || \pi^{q})] + \lambda w_{i}
\end{equation}
\begin{equation}
\label{eq:gradient_sgd}
\nabla_{i}^{sgd} =  E_{\pi^{q}}[d_{\hat{f}}(x_{i}^{q} || \pi^{q})] + \lambda w_{i}
\end{equation}

The expectation term in (\ref{eq:gradient_sgd}) is approximated by the Metropolis-Hasting sampling method. More specifically, whether or not a sample is selected depends on the following steps (a), (b), and (c). 

(a). At time $t$, suppose the state lies in $\pi^{q}_{t}$. Given the weight $\textbf{w}$ and score-based permutations $\{\textbf{x}^{q}_{i}\}_{i=1}^{K}$, the probability of generating $\pi^{q}_{t}$ is based on the Mallows model as shown in (\ref{eq:prob}), where $Z(\textbf{w})$ is a normalization term that is independent of any permutation. 
\begin{equation}
\label{eq:prob}
P(\pi_{t}^{q}) = \frac{1}{Z(\textbf{w})}\exp(-\sum\limits_{i=1}^{K}w_{i} d_{\hat{f}}(\textbf{x}^{q}_{i} || \bar{\pi}_{t}^{q}))
\end{equation}
(b). Whether or not the next step goes to a new state $\bar{\pi}_{t}^{q}$ depends on the ratio $\alpha$ as defined in (\ref{eq:ratio}). 
\begin{equation}
\label{eq:ratio}
\alpha = \frac{P(\bar{\pi}^{q}_{t} | \textbf{w}, \{\textbf{x}_{i}\}_{i=1}^{K})}{P(\pi^{q}_{t} | \textbf{w}, \{\textbf{x}_{i}\}_{i=1}^{K})} = \frac{\exp(-\sum_{i=1}^{K}w_{i} d_{\hat{f}}(\textbf{x}^{q}_{i} || \bar{\pi}_{t}^{q}))}{\exp(-\sum_{i=1}^{K}w_{i} d_{\hat{f}}(\textbf{x}^{q}_{i} || \pi^{q}_{t}))}
\end{equation}
(c). If $\alpha > 0.9$, we accept $\pi^{q}_{t} \rightarrow \bar{\pi}^{q}_{t}$ with a probability at least $90\%$; otherwise, the state $\pi^{q}_{t}$ stays.  

\vspace{2mm}
As soon as $M$ samples are collected, the sampling process stops and the expectation term in (\ref{eq:gradient_sgd}) is estimated by taking an average of the $M$ samples according to (\ref{eq:gradient_mod}). 
\begin{equation}
\label{eq:gradient_mod}
E_{\pi^{q}}[d_{\hat{f}}(\textbf{x}_{i}^{q} || \pi^{q})] \approx \frac{1}{M} \sum\limits_{t=1}^{M} d_{\hat{f}}(\textbf{x}_{i}^{q} || \pi^{q}_{t})
\end{equation}  
Finally, given a learning rate $\mu$, the update for weights $\textbf{w}$ follows (\ref{eq:update_w}), which ensures the constraints for $\textbf{w}$.
\begin{equation}
\label{eq:update_w}
w_{i}^{(t+1)} = \frac{w_{i}^{(t)}\exp(-\mu \nabla^{sgd}_{i})}{\sum_{j=1}^{K}w_{j}^{(t)}\exp(-\mu \nabla^{sgd}_{j})}
\end{equation}
The above update goes through all $i=1,...,K$ and queries $q\in Q$. Several iterations are repeated until reaching convergence. 

\begin{proposition}
In the linear structured framework, given the number of candidates $N$ and the number of permutations $K$, the computational complexity in the training stage is $O(NK)$. 
\end{proposition}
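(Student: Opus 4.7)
The plan is to decompose one training step into its elementary operations and bound the cost of each in terms of $N$ and $K$. Fix a query $q$ and an SGD iteration. The work done splits into three pieces: (i) evaluating the stochastic gradient $\nabla_i^{sgd}$ in (\ref{eq:gradient_sgd}) for every $i\in\{1,\dots,K\}$; (ii) running the Metropolis--Hastings proposal step described in (\ref{eq:prob})--(\ref{eq:ratio}) to approximate the expectation via (\ref{eq:gradient_mod}); and (iii) applying the multiplicative weight update (\ref{eq:update_w}). I would argue that each of these is $O(NK)$.

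First, I would appeal to Corollary~5: for the submodular functions considered here, the LB divergence (\ref{eq:ndcg_f}) reduces to two dot products of length $N$, so once the permutation $\sigma_{\textbf{x}_i^q}$ induced by the sorted scores is known, a single evaluation of $d_{\hat{f}}(\textbf{x}_i^q\Vert\pi^q)$ takes time $O(N)$. The sorts $\sigma_{\textbf{x}_i^q}$ are computed once, up front, in $O(NK\log N)$ total, and then cached, so they contribute nothing to the per-iteration cost. Summing $O(N)$ over the $K$ permutations yields $O(NK)$ for (i). For (ii), the acceptance ratio $\alpha$ requires evaluating $\sum_{i=1}^K w_i\, d_{\hat{f}}(\textbf{x}_i^q\Vert\bar{\pi}_t^q)$ for each proposed state, which is again $O(NK)$ per proposal; treating the sample budget $M$ as a constant independent of $N$ and $K$, the full sampling loop stays at $O(NK)$. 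For (iii), the update in (\ref{eq:update_w}) touches each coordinate of $\textbf{w}$ once and normalizes by a sum of $K$ exponentials, giving $O(K)$, which is absorbed.

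Adding the three contributions, the dominant term is $O(NK)$, establishing the claim. The only subtlety I anticipate, and the step I would be most careful about, is accounting for the sorting preprocessing and the Metropolis--Hastings chain length: I would make explicit that sorts are amortized across iterations and that $M$ is treated as a fixed constant of the algorithm, so that neither term inflates the stated bound. With those conventions fixed, the rest is a direct count of flops against the formulas already stated in the paper.
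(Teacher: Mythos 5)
The paper itself offers no proof of this proposition --- it is asserted immediately after the description of the training updates, and the appendix only proves Corollary~5 and the inequality used in Proposition~6 --- so there is no ``paper's route'' to compare yours against. On its own merits, your operation count is sound and is essentially the only sensible way to justify the claim: by Corollary~5, with $f(X)=g(|X|)$ each evaluation of $d_{\hat{f}}(\textbf{x}_i^q\Vert\pi^q)$ reduces to inner products of length $N$ (and the term $\sum_i \textbf{x}(\sigma_{\textbf{x}}(i))\delta_g(i)$ is even constant across samples, since $\textbf{x}_i^q$ never changes during training), so the gradient evaluation, the Metropolis--Hastings acceptance ratio, and the multiplicative update cost $O(NK)$, $O(NK)$ per proposal, and $O(K)$ respectively. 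The caveats you surface are exactly the ones the paper silently assumes: the bound only holds per query and per iteration with the sample budget $M$ treated as a fixed constant (the honest total is $O(|Q|\,T\,M\,NK)$), and the one-time sorting cost $O(NK\log N)$ strictly exceeds $O(NK)$ unless it is excluded as preprocessing. Since the proposition gives no indication of which quantities are held constant, making those conventions explicit is not a subtlety but the entire content of the proof; your write-up supplies precisely what the paper omits.
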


The inference process is formulated as follows: given a test data $\{q, N, K, X\}$ where $q$ is a query, $N$ and $K$ have been defined as above, and $X = \{\textbf{x}^{q}_{1}, ..., \textbf{x}^{q}_{K}\}$ represents the score-based permutation associated with the query $q$, we estimate an optimal order-based permutation $\hat{\sigma}$ as defined in (\ref{eq:inference}).
\begin{equation}
\label{eq:inference}
\hat{\sigma} = \arg\min\limits_{\pi^{q}}\sum\limits_{i=1}^{K} w_{i}^{*} d_{\hat{f}}(\textbf{x}_{i}^{q} || \pi^{q})
\end{equation}
where $w^{*}$ refers to the weight vector trained in the learning stage. Generally, the problem in (\ref{eq:inference}) is an NP-hard combinatorial problem, but the LB divergence provides a close-form solution $\hat{\sigma} = \sigma_{\mu}$, where $\mu = \sum_{i=1}^{K}w_{i}^{*}\textbf{x}_{i}^{q}$. The complete inference algorithm is shown in Algorithm 1. Note that the order-based permutation $\hat{\sigma}$ is finally obtained by sorting the numeric values in $R_{X}$ and the permutation associated with $R_{X}$ is returned in a decreasing order.

\begin{algorithm}[tb]
   \caption{Inference Algorithm 1}
   \label{alg:inference}
\begin{algorithmic}
   \STATE {\bfseries 1.} Input: a test data $\{q, N, K, X\}$, and the trained weights $\textbf{w}^{*} = \{w_{1}^{*}, w_{2}^{*}, ... , w_{K}^{*}\}$.
   \STATE {\bfseries 2.} Compute $R_{X} = \sum_{i=1}^{K} w^{*}_{i}\textbf{x}_{i}$.
   \STATE {\bfseries 3.} Argsort $R_{X}$ in a decreasing order $\rightarrow \hat{\sigma}$.
   \STATE {\bfseries 4.} Output: $\hat{\sigma}$.
\end{algorithmic}
\end{algorithm}

\subsection{The Nested Structured Framework}
The linear structured framework involves several potential problems: the first problem is that the score-based permutations $\{\textbf{x}_{i}\}_{i=1}^{K}$ may not interact with each other, since one permutation might be partially redundant with another; the second problem lies in the fact that a permutation $\textbf{x}_{i}$ tends to become dominant over the rest. To overcome these problems, an additional hidden layer is utilized to construct a nested structured framework as shown in Figure \ref{fig:ns}.

\begin{figure}
\centerline{\epsfig{figure=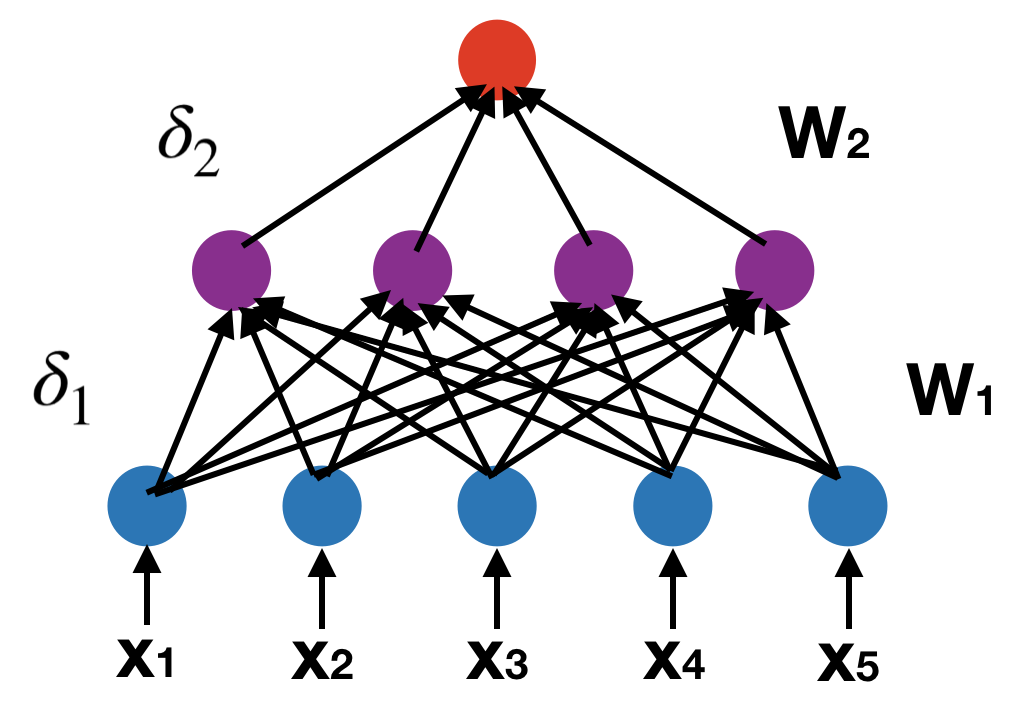,width=55mm}}
\caption{{\it An illustration of the nested structured framework.}}
\label{fig:ns}
\end{figure}

The objective function for the nested structured framework is formulated as (\ref{eq:nested}), where $K_{1}$ and $K_{2}$ separately represent the numbers of units in the input and hidden layers, $\textbf{W}_{1}\in R^{K_{2}*K_{1}}$ and $\textbf{W}_{2}\in R^{1 * K_{2}}$ denote weights for the bottom and upper layers respectively, and $\lambda_{1}$ and $\lambda_{2}$ refer to regularization terms.  By setting both $\Phi_{1}$ and $\Phi_{2}$ as increasing concave functions, the objective function becomes a concave function and thus it needs to be maximized with respect to $\textbf{W}_{1}$ and $\textbf{W}_{2}$.
\begin{equation}
\label{eq:nested}
\begin{split}
\max\limits_{\textbf{W}_{1}, \textbf{W}_{2}}&\frac{1}{|Q|}\sum\limits_{q\in Q} \Phi_{2}(\sum\limits_{i=1}^{K_{2}}\textbf{W}_{2}(i)\Phi_{1}(E_{\pi^{q}}[\sum\limits_{j=1}^{K_{1}}\textbf{W}_{1}(i, j)d_{\hat{f}}(\textbf{x}_{j}^{q}||\pi^{q})])) + \frac{\lambda_{1}}{2} ||\textbf{W}_{1}||_{F}^{2} + \frac{\lambda_{2}}{2} ||\textbf{W}_{2}||_{F}^{2}		\\
& s.t., \sum\limits_{j=1}^{K_{1}} \textbf{W}_{1}(i, j) = 1, \hspace{2mm} \sum\limits_{i=1}^{K_{2}}\textbf{W}_{2}(i) = 1,  \hspace{2mm} \textbf{W}_{1}(i, j) \ge 0, \hspace{2mm} \textbf{W}_{2}(i) \ge 0, \hspace{2mm} \forall i, j	\\
\end{split}
\end{equation}
The update for weights $\textbf{W}_{1}$ and $\textbf{W}_{2}$ follows a feed-forward manner that is similar to that employed for a standard Multiple Layer Perceptron (MLP) training. 

As to update for the weights of the bottom layer, the temporary variables $\delta_{1}^{(t)}(i)$ and $\nabla_{1}(i, j)$ need to be firstly computed by (\ref{eq:delta_1}) and (\ref{eq:nabla_1}) respectively, and then $\textbf{W}_{1}$ is updated by (\ref{eq:update_w1}). Note that the Metropolis-Hasting sampling method is applied in both (\ref{eq:delta_1}) and (\ref{eq:nabla_1}), where $M$ denotes the number of sampling permutations, and in (\ref{eq:update_w1}) $\mu$ refers to the learning rate. 
\begin{equation}
\label{eq:delta_1}
\begin{split}
\delta_{1}^{(t)}(i) &= E_{\pi^{q}} [\sum\limits_{j=1}^{K_{1}}\textbf{W}_{1}^{(t)}(i, j) d_{\hat{f}}(\textbf{x}_{j}^{q} || \pi^{q})] \approx \frac{1}{M} \sum\limits_{t=1}^{M}\sum\limits_{j=1}^{K_{1}}\textbf{W}_{1}^{(t)}(i, j) d_{\hat{f}}(\textbf{x}_{j}^{q}||\pi_{t}^{q})
\end{split}
\end{equation}
\begin{equation}
\label{eq:nabla_1}
\begin{split}
\nabla_{1}(i, j) &= (\Phi_{1}(\delta_{1}^{(t)}(i)))^{'} E_{\pi^{q}}[d_{\hat{f}}(\textbf{x}_{j}^{q}||\pi^{q})] + \lambda_{1} \textbf{W}_{1}^{(t)}(i, j)	\\
& \approx (\Phi_{1}(\delta_{1}^{(t)}(i)))^{'} \frac{1}{M} \sum\limits_{t=1}^{M}d_{\hat{f}}(\textbf{x}_{j}^{q} || \pi_{t}^{q})  + \lambda_{1} \textbf{W}^{(t)}_{1}(i, j)    \\
\end{split}
\end{equation}
\begin{equation}
\label{eq:update_w1}
\textbf{W}_{1}^{(t+1)}(i, j) = \frac{\textbf{W}_{1}^{(t)}(i, j) \exp(-\mu \nabla_{1}(i, j))}{\sum\limits_{j=1}^{K_{1}}\textbf{W}_{1}^{(t)}(i, j)\exp(-\mu \nabla_{1}(i, j))}, \hspace{2mm} \forall i, j 
\end{equation}
The update for $\textbf{W}_{2}$ starts when the update for $\textbf{W}_{1}$ is finished. The new $\delta_{2}^{(t)}$ and $\nabla_{2}(i)$ are separately derived via (\ref{eq:delta_2}) and (\ref{eq:nabla_2}) and they are based on the messages propagated from the bottom layer. Finally, the update for $\textbf{W}_{2}$ is computed by (\ref{eq:update_w2}). 

\begin{equation}
\label{eq:delta_2}
\delta_{2}^{(t)} = \sum\limits_{i=1}^{K_{2}}\textbf{W}_{2}^{(t)}(i) \Phi_{1}(\delta_{1}^{(t+1)}(i))
\end{equation}
\begin{equation}
\label{eq:nabla_2}
\nabla_2(i) = (\Phi_{2}(\delta_{2}^{(t)}))^{'} \Phi_{1}(\delta_{1}^{(t+1)}(i)) + \lambda_{2} \textbf{W}^{(t)}_{2}(i)
\end{equation}
\begin{equation}
\label{eq:update_w2}
\textbf{W}_{2}^{(t+1)}(i) = \frac{\textbf{W}_{2}^{(t)}(i)\exp(-\mu \nabla_{2}(i))}{\sum\limits_{j=1}^{K_{2}}\textbf{W}_{2}^{(t)}(j)\exp(-\mu \nabla_{2}(j))}, \hspace{2mm} \forall i
\end{equation}

\begin{proposition}
In the nested structured framework, given the numbers $K_{1}$ and $K_{2}$ for the input and hidden layers respectively, and the number of candidates $N$, the computational complexity for the entire training process is $O(NK_{1} + K_{1}K_{2})$.
\end{proposition}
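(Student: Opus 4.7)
The plan is to amortize the per-iteration cost over the steps that update $\textbf{W}_1$ and $\textbf{W}_2$, showing that the $O(NK_1)$ term comes from computing the LB divergences against a sampled permutation and the $O(K_1 K_2)$ term comes from propagating information through the nested structure. Treating the sample size $M$ from the Metropolis--Hasting step and the number of training iterations/queries as constants, I will argue about the cost of a single inner update.

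First I would observe that, for the cardinality-based submodular function $f(X)=g(|X|)$, the discount terms $\delta_g(i)=g(i)-g(i-1)$ used in Corollary 5 can be precomputed once in $O(N)$ time and reused throughout training. Given this, a single LB divergence evaluation $d_{\hat{f}}(\textbf{x}_j^q \| \pi^q)$ amounts to an inner product of two length-$N$ vectors after an $O(N)$ (or $O(N\log N)$, but the sort order of $\textbf{x}_j^q$ can also be cached) access to the sorted permutation, which is $O(N)$. Hence for the $K_1$ inputs the per-sample divergence computation costs $O(NK_1)$, and this is done only once per iteration because the resulting vector of divergences is shared across all $K_2$ hidden units.

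Next, using this cached vector of divergences, I would walk through the update equations (\ref{eq:delta_1})--(\ref{eq:update_w2}) and count operations. For the bottom layer, computing each $\delta_1^{(t)}(i)$ from (\ref{eq:delta_1}) is an $O(K_1)$ weighted sum, yielding $O(K_1 K_2)$ across all $i$. The gradient $\nabla_1(i,j)$ in (\ref{eq:nabla_1}) is a constant-time combination of precomputed quantities, and the exponentiated update with its normalization in (\ref{eq:update_w1}) costs $O(K_1)$ per hidden unit, again totaling $O(K_1 K_2)$. For the top layer, (\ref{eq:delta_2})--(\ref{eq:update_w2}) are all $O(K_2)$ in the size of the hidden layer since the $\Phi_1(\delta_1^{(t+1)}(i))$ values are already available. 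Summing these contributions with the divergence cost gives $O(NK_1+K_1 K_2+K_2)=O(NK_1+K_1 K_2)$, as claimed.

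The main subtlety I would worry about is accounting for the Metropolis--Hasting sampler: each proposed state requires an acceptance ratio of the form $\sum_j w_j d_{\hat f}(\textbf{x}_j^q\|\pi_t^q)$, which needs new LB divergences for the proposed permutation. If each proposal is handled naively, one could multiply the whole bound by $M$; the cleanest way to absorb this is to treat $M$ as a constant and to note that the divergence evaluation against any permutation is still $O(N)$, so the sampler contributes $O(MNK_1)=O(NK_1)$, which is already inside the stated bound. With that absorbed, the two additive terms $NK_1$ and $K_1 K_2$ isolate cleanly, completing the argument.
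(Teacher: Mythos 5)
The paper states this proposition without providing any proof --- neither in the main text nor in the appendix (which only proves Corollary 5 and the inequality (12)) --- so there is no argument of the authors' to compare yours against. Taken on its own terms, your operation count is correct and supplies the missing justification. The key observation, which you state explicitly and which is essential to getting the claimed bound rather than the naive $O(NK_{1}K_{2})$, is that the divergences $d_{\hat{f}}(\textbf{x}_{j}^{q}\,\|\,\pi_{t}^{q})$ do not depend on the hidden-unit index $i$, so the $O(NK_{1})$ evaluation is done once and then shared across all $K_{2}$ weighted sums in (\ref{eq:delta_1}); the remaining updates (\ref{eq:nabla_1})--(\ref{eq:update_w2}) are then $O(K_{1}K_{2})$ arithmetic on cached scalars. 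Your handling of the Metropolis--Hastings sampler and the iteration/query counts as constants is the only reading under which the stated bound (which mentions only $N$, $K_{1}$, $K_{2}$) can hold, and the same convention is evidently intended for the companion proposition giving $O(NK)$ for the linear framework, so that is a fair assumption rather than a gap. The one minor point worth flagging is that obtaining $\sigma_{\textbf{x}_{j}^{q}}$ requires an $O(N\log N)$ sort, but as you note this is a one-time preprocessing cost per permutation and disappears under the big-$O$ conventions already in force.
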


The inference for the nested structured framework shares the same steps that the linear structured framework except that the step $2$ in Algorithm $1$ is replaced by (\ref{eq:nested_inference}).
\begin{equation}
\label{eq:nested_inference}
R_{X} = \Phi_{2}(\sum\limits_{i=1}^{K_{2}}\textbf{W}_{2}(i)\Phi_{1}(\sum\limits_{j=1}^{K_{1}}\textbf{W}_{1}(i, j)\textbf{x}_{j}^{q}))
\end{equation}

\section{Applications}
We tested our algorithms in four different applications and report the results in this section\footnote{The C++ codes for the experiments are uploaded to the Github website: https://github.com/uwjunqi/Subrank}. In addition, another unsupervised rank aggregation method based on ULARA is tested for the purpose of comparison with the LB divergence-based methods.  As introduced in Section 1, the applications include Information Retrieval, Combining Distributed Neural Networks, Influencers in Social Networks, and Distributed Automatic Speech Recognition (ASR). In the experiments, the Sigmoid function ($Sigmoid(x), x\ge 0$) was chosen for the discounted factor $D(x)$, $\Phi_{1}(\cdot)$ and $\Phi_{2}(\cdot)$. The learning rate $\mu$ is set to $0.1$ and all the regularization terms are fixed to $0.01$.

\subsection{Information Retrieval}
Given a query $q$, it is expected to find the top $C$ candidates $\{d^{(q)}_{1}, d^{(q)}_{2}, ..., d^{(q)}_{C}\}$ associated with potential relevance scores $\{r^{(q)}(1), r^{(q)}(2), ..., r^{(q)}(C)\}$. The NDCG score, which is defined in (\ref{eq:ndcg}) and lies in the interval between [0, 1], is a tool for evaluating the quality of the ranking $\sigma$ of the retrieved documents. A larger NDCG score represents a ranking result with more confidence. 

The experiments were conducted on the LETOR $4.0$ dataset, which is a package of benchmark data sets for research on Learning to Rank. The dataset contains standard features, relevance judgements, data partitioning, and several baselines. The LETOR $4.0$ dataset involves two query sets named MQ$2007$ and MQ$2008$ for short. There are about $1700$ queries in MQ$2007$ and about $800$ queries in MQ$2008$. For each query, there are no more than $40$ document candidates associated with the given relevant scores.

We applied our unsupervised rank aggregation methods on the dataset and compared the NDCG results with some learning to rank approaches. The learning to rank approaches include RankSVM \cite{RankSVM}, ListNet \cite{ListNet}, AdaRank \cite{AdaRank}, and RankBoost \cite{RankBoost}. As for the setup of the LB divergence-based methods, there were $K=46$ permutations and $N=40$ candidates associated with each query in total, and the number of units in the hidden layer for the nested structure is set to $10$.

\begin{figure}[htb]
\centerline{\epsfig{figure=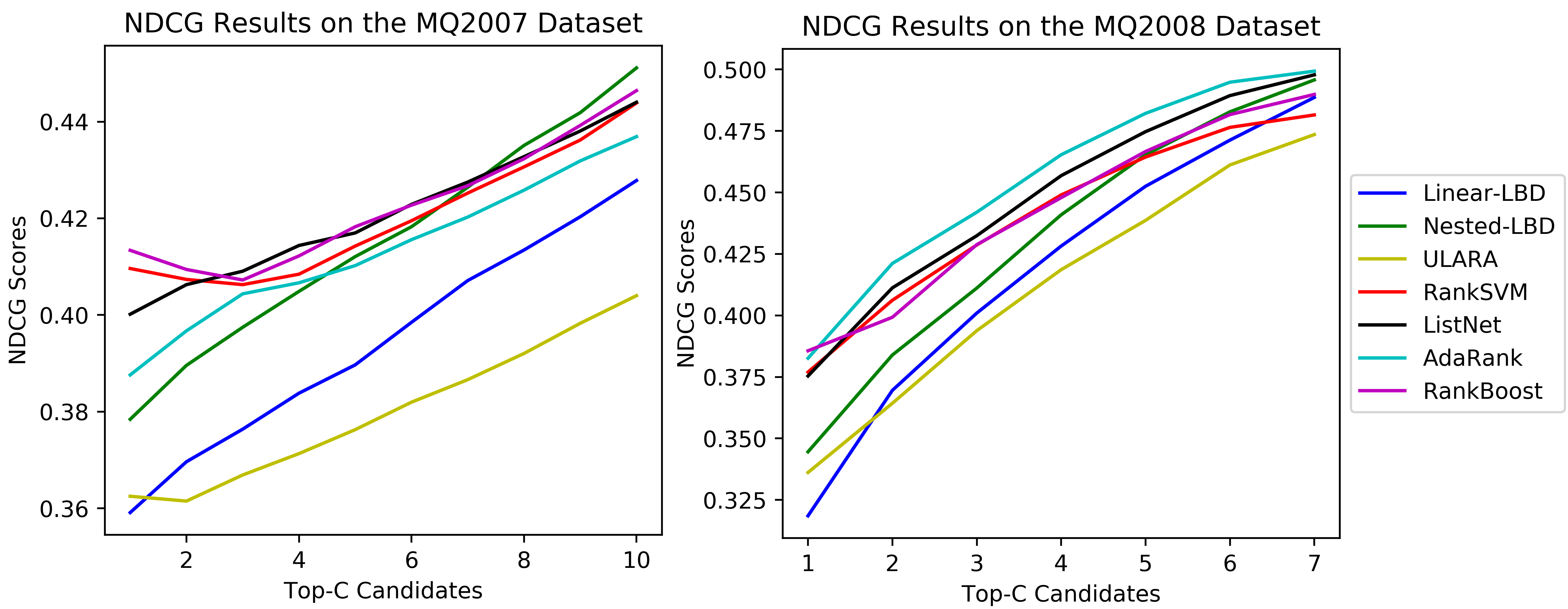,width=142mm}}
\caption{{\it Results on the LETOR dataset.}}
\label{fig:letor}
\end{figure}

The NDCG results on the MQ$2007$ and MQ$2008$ datasets are shown in Figure \ref{fig:letor}, where Linear-LBD and Nested-LBD represent linear and nested structured LB divergence-based methods for unsupervised rank aggregation respectively. The results show that the nested structured LB divergence-based method is comparable to the learning to rank methods and even obtains better results when more potential candidates are considered.

\subsection{Combining Distributed Neural Networks}
Big data requires deep learning architectures to be set up in a distributed way, for example in automatic speech recognition and machine translation. This study is about how to combine the hypothesis outputs from the distributed neural networks into one aggregated result. The result is expected to be as close to the ground truth as possible, which corresponds to a higher accuracy for prediction. If the outputs from the distributed neural networks are seen as score-based permutations, the task of combining distributed neural networks is taken as the unsupervised rank aggregation on score-based permutations.  
 
The combination of distributed neural networks was conducted via a digit recognition task on the MNIST dataset. The distributed neural networks were constructed as shown in Figure~\ref{fig:dnns}. The distributed system consisted of $6$ neural networks in total, where the first two were deep neural networks \cite{dnn}, the two in the middle were convolutional neural networks \cite{image_cnn}, and the two last were multi-layer perceptrons with the stacked Auto-encoder (SAE) initialization technique \cite{SDAE}. So, there are $6$ permutations in total in the distributed deep model architecture, and in each permutation there are $10$ scores for candidates from $0$ to $9$.   In addition, all the neural networks were set up in different architectures because of the changes in the number of units in the hidden layers and the size of the neuron units. 

\begin{figure}[t]
\centerline{\epsfig{figure=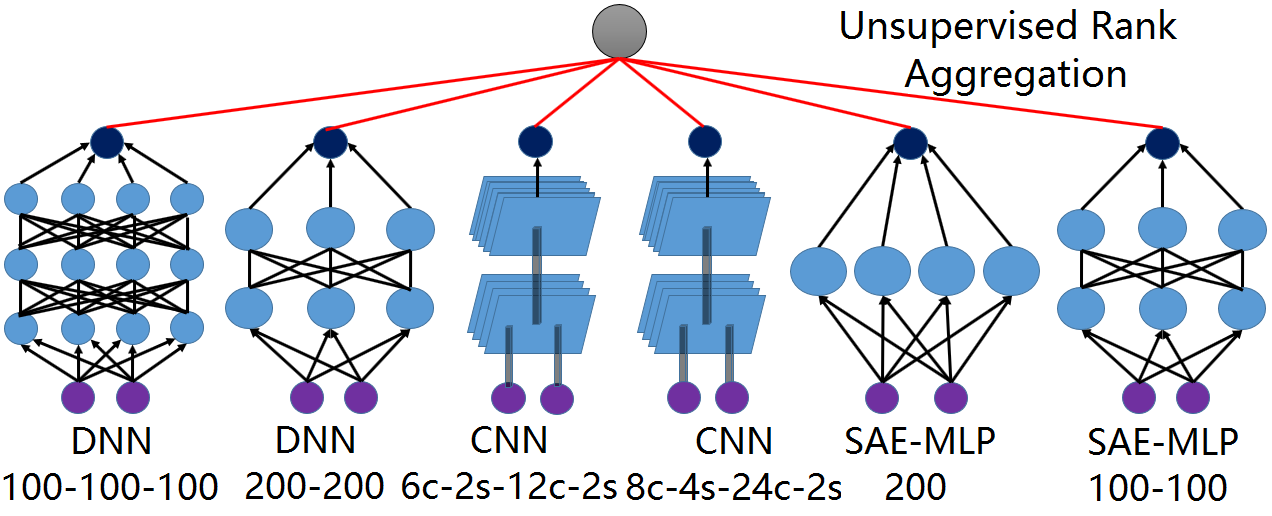,width=110mm}}
\caption{{\it The architecture of the distributed neural networks.}}
\label{fig:dnns}
\end{figure}
 
As shown in Figure $\ref{fig:dnns}$, an unsupervised rank aggregation layer was added to the top of the distributed system to combine the outputs of the neural networks. Table $1$ shows the detailed architecture configurations corresponding to the different neural networks and results based on the digit error rates (DERs). As to the configurations for the aggregated methods, the number of permutations $K$ is configured to $6$, the number of units of the hidden layer in the nested structured framework is set to $20$, and there is only one output corresponding to the final aggregated result. 

\begin{table}[htbp]
\begin{center}
\label{tab:res_sfm}
\begin{tabular}{|c||c|}
\hline
 Deep Model Architectures & DER  \\
\hline
DNN-`624-100-100-100-10'	&2.73\%	\\
\hline
DNN-`624-200-200-10'		&3.29\%	\\
\hline
CNN-`624-6c-2s-12c-2s-10'	&3.43\%	\\
\hline
CNN-`624-8c-4s-24c-2s-10'	&3.34\%	\\
\hline
SAE-MLP-`624-200-10'		&4.47\%	\\
\hline
SAE-MLP-`624-100-100-10'	&2.73\%	\\
\hline
\hline
Averaging			&\textbf{2.89}\%	\\
\hline
ULARA			&\textbf{2.52}\%	\\
\hline
Linear-LBD		&\textbf{2.70}\%	\\
\hline
Nested-LBD   		&\textbf{2.43}\%   	\\
\hline
\end{tabular}
\caption{Digit Error Rates on the MNIST dataset.}
\end{center}
\end{table}

Note that the experiments were conducted by the deep learning toolkit \cite{dnntool} and DERs in Table 1 are not the state-of-the-art results, so we just show that the unsupervised rank aggregation on score-based permutations can further lower DERs. Particularly, the method based on the nested structured LB divergence obtains the maximum gain, while ULARA performs even better than the method based on the linear structured LB divergence and the simple averaging method.
 
\subsection{Influencers in Social Networks}
People are pair-wisely connected in social networks where the pair-wise preference between two individuals is provided. The study of Influencers in Social Networks aims to predict the human judgement on who is more influential with high accuracy. We studied the unsupervised submodular rank aggregation methods approaching to the baseline results obtained by the supervised logistic regression. A Receiver Operating Characteristic (ROC) curve \cite{auc} is used to evaluate all the methods. The ROC score is in the interval $[0, 1]$, and a higher ROC value means a higher prediction accuracy.  

The data for the task are provided by the Kaggle competition task (Influencers in Social Networks) and comprise a standard, pair-wise preference learning task. For pair-wise preference data points A and B, a combined feature $X$ is pre-computed by (\ref{eq:pre_social}), where $X_{A}$ and $X_{B}$ are $11$ pre-computed, non-negative numeric features based on twitter activity, which include the volume of interactions and number of followers.
\begin{equation}
\label{eq:pre_social}
X = \log(1 + X_{A}) - \log(1 + X_{B})
\end{equation}

The binary label represents a human judgement about whom of two individuals is more influential. The goal of the task is to predict the human judgement on who is more influential with high accuracy. Specifically, for the unsupervised rank aggregation task, the purpose is to assign a likelihood to each candidate by aggregating $11$ features of the candidate. 

There are $5500$ labeled data points that are randomly split into a training set (which includes $4400$ data points) and a testing set with the rest. The baseline system is based on supervised logistic regression, which is used to compare to our unsupervised methods. As for the configuration of the LB divergence-based methods, the number of permutations $K$ is set to $11$, the number of units of the hidden layer in the nested structured framework is set to $40$, and the numbers of candidates $N$ for training and testing are set to $4400$ and $1100$ respectively. 

The results based on the ROC scores are shown in Table $2$. Note that all the results are an average of $10$ testing results based on different partitioned datasets. Although the results based on the unsupervised submodular rank aggregation methods are all below the baseline result, the approach based on the nested structured LB divergence is close to the baseline.  

\begin{table}[htbp]
\begin{center}
\label{tab:res_social}
\begin{tabular}{|c|c|c|c|}
\hline
Baseline &  Nested-LBD  & Linear-LBD & ULARA  \\
\hline
0.8631 & 0.8081 & 0.6777 & 0.7764    \\
\hline
\end{tabular}
\caption{The ROC scores on the dataset.}
\end{center}
\end{table}

\subsection{Distributed Automatic Speech Recognition}
The last application based on the unsupervised submodular rank aggregation refers to the distributed automatic speech recognition system. An illustration of the distributed ASR system is shown in Figure $\ref{fig:nMLPs}$. Compared to the DNN combination where each of the deep learning models has a different structure or configuration, all the deep learning models share the same initial setup including the types and the number of layers. A training dataset is partitioned into $8$ non-overlapping subsets by means of the robust submodular data partitioning methods~\cite{jun16}. Since each of the $8$ subsets is employed for training a particular DNN-based acoustic model, $8$ different DNN-based acoustic models are finally collected. In the evaluation stage, the test data are fed into all of the $8$ acoustic models, and all of the outputs from the distributed ASR system are expected to be aggregated into a combined result with a higher accuracy. 

Traditionally, the supervised Adaboost method \cite{jun16} is applied, but it is very expensive and difficult to obtain the ground truth in practice. Although approximated clustered triphone labels can be obtained via forced-alignment, the labels are not perfectly correct to supervise the Adaboost training. Thus, the unsupervised submodular rank aggregation on score-based permutations is attempted to replace the supervised Adaboost method. 

\begin{figure}[htb]
\centerline{\epsfig{figure=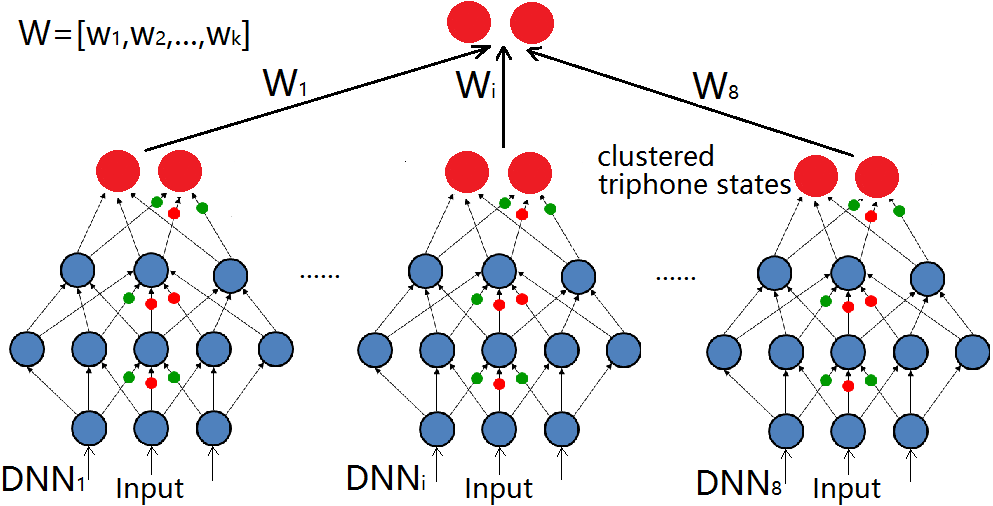,width=110mm}}
\caption{{\it The distributed ASR system.}}
\label{fig:nMLPs}
\end{figure}

Following the steps in \cite{jun16}, the submodular partitioning functions are composed according to the prior phonetic knowledge that a triphone corresponds to $8$ different biphones based on the phonetic knowledge including `Place of Articulation', `Production Manner', 'Voicedness' and `Miscellaneous'. For training each DNN-based acoustic model, the entire dataset is split into $8$ disjoint data subsets by formulating the problem as a robust submodular data partitioning problem as shown in (\ref{eq:partition}). 
\begin{equation}
\label{eq:partition}
\begin{split}
&\max\limits_{\pi\in \Pi}\min\limits_{i} f_{i}(A_{i}^{\pi})  \\
s.t., \cup_{i}A_{i}^{\pi} &= V, \hspace{2mm} A_{i}^{\pi} \cap A_{j}^{\pi} = \Phi, \forall i, j	\\
\end{split}
\end{equation} 

\noindent where $\pi=(A_{1}^{\pi}, A_{2}^{\pi}, ..., A_{m}^{\pi})$ is a partition of a finite set $V$, $\Pi$ denotes the sets corresponding to all possible partitions of $V$, and $A_{i}^{\pi}$ represents a partitioned data subset. Note that $\forall i, j$, the intersection of any two sets $A_{i}^{\pi}$, $A_{j}^{\pi}$ is empty, while the union of them covers the entire dataset. In addition, $\{f_{i}\}_{i=1}^{8}$ refers to $8$ heterogeneous submodular functions which are composed by mapping from $1$ triphone to $8$ biphones. The Minorization-Maximization (MMAX) algorithm is applied to obtain the approximated solutions to the problem. 

The experiments were conducted on the TIMIT database. The training data consist of $3696$ utterances in total. The development and test data are composed of $200$ and $1200$ utterances, respectively. Data preprocessing included extracting $39$-dimensional Mel Frequency Cepstral Coefficient (MFCC) features that correspond to $25.6$ms speech signals. In addition, mean and variance speaker normalization were also applied. The acoustic models were initialized as clustered triphones modeled by $3$-state left-to-right hidden Markov models (HMMs). The state emission of the HMM was modeled by the Gaussian mixture model (GMM). The DNN targets consisted of $3664$ clustered triphone states. A 3-gram language model was used for decoding. 

The $8$ subsets of data partitioned by the submodular functions were used for training $8$ DNNs in parallel. The units at the input layer of each DNN correspond to a long-context feature vector that was generated by concatenating $11$ consecutive frames of the primary MFCC feature followed by a discrete cosine transformation (DCT)~\cite{auditory}. Thus, the dimension of the initial long-context feature was $429$ which was reduced to $361$ after the DCT~\cite{smbf}. In addition, there were $4$ hidden layers with a setup of $1024$-$1024$-$1024$-$1024$ for each DNN. The parameters of the hidden layers were initialized via Restricted Boltzmann Machine pre-training, and then fine-tuning by the MLP Back-propagation algorithm~\cite{gammatone}. Besides, the feature-based maximum likelihood linear regression was applied for the DNN speaker adaptation~\cite{gales98}. 

When the training of all the DNN-based acoustic models was done, the final posteriors of the clustered triphones associated with the training data should be separately obtained from each of the DNN-based acoustic models. Those posteriors were taken as permutation data for training the unsupervised rank aggregation models. In the testing stage, the posteriors collected from the $8$ DNN-based acoustic models are combined to one permutation that is expected to be as close to the ground truth as possible. 

For the configuration of the two unsupervised submodular rank aggregation formulations, the number of permutations $K$ is set to $8$ and the dimension of a permutation is configured as $3664$ which matches the number of clustered triphones. Besides, the number of units of the hidden layer in the nested structured framework is set to $20$, and there is only one output corresponding to the final aggregated permutation. 

Table $3$ shows the ASR decoding results from each of the DNN-based acoustic models, and the Table $4$ presents the combined ones based on the different aggregation methods. The results suggest that the unsupervised submodular rank aggregation method based on the nested structured formulation achieves better result than the baseline system based on the Adaboost method, whereas the others are worse than the baseline. The marginal gain by the nested structured formulation arises from the potential bias caused by forced-alignment. 

\begin{table}[htbp]
\begin{center}
\label{tab:res}
\begin{tabular}{|c||c|c|c|c|c|c|c|c|}
\hline
Methods  &DNN1 &DNN2  &DNN3  &DNN4   &DNN5   &DNN6   &DNN7   &DNN8 \\
\hline
PER  &20.7 & 20.1 & 20.2 & 20.6   & 20.4 & 20.8 & 20.1  & 20.3		 \\
\hline
\end{tabular}
\caption{Phone Error Rate (PER) (\%).}
\end{center}
\end{table}

\begin{table}[htbp]
\begin{center}
\label{tab:res}
\begin{tabular}{|c||c|c|c|c|c|}
\hline
Methods &Averaging &Adaboost  &ULARA    &Linear-LBD  &Nested-LBD \\
\hline
PER  &20.1 & 18.3 & 19.5  &18.7  & \textbf{18.1}	 \\
\hline
\end{tabular}
\caption{Phone Error Rate (PER) (\%).}
\end{center}
\end{table}

\section{Conclusions and Future Work}
This study focuses on several algorithms for unsupervised submodular rank aggregation on score-based permutations based on the LB divergence in both linear and nested structured frameworks. Their use in Information Retrieval, Combining Distributed Neural Networks, Influencers on Social Networks, and Distributed Automatic Speech Recognition tasks suggest that the nested structured LB divergence can obtain significantly more gains. However, the gains obtained with respect to the other approaches are lower to varying degrees. In addition, our methods can be scalable to large-scale datasets because of their low computational complexity. 

Future work will study how to generalize the nested structure to a deeper structure with more hidden layers. Although the convexity of the objective function with a deep structure can be maintained, the use of the message-passing method to deeper layers cannot obtain a satisfying result in the applications. Therefore, a better unsupervised learning approach for training an LB divergence objective function with a deeper structure formulation is necessary. 

\small

\bibliography{subrank}
\bibliographystyle{nips_2017}
\normalsize
\newpage
\section{Appendix}

\subsection{Proof for the Corollary 5}
For a submodular function $f(X) = g(|X|)$, the Lovasz extension $\hat{f}$ associated with $f$ is 
\begin{equation}
\hat{f}(x) = \sum\limits_{i=1}^{K} \textbf{x}(\sigma_{\textbf{x}}(i))\delta_{g}(i) = <\textbf{x}, h_{\sigma_{\textbf{x}}}^{f}>
\end{equation}
where $\delta_{g}(i) = g(i) - g(i - 1)$. Then applying the Lovasz Bregman divergence (8), we have
\begin{equation}
d_{\hat{f}}(\textbf{x} || \textbf{y}) = <\textbf{x}, h_{\sigma_{\textbf{x}}}^{f} - h_{\sigma_{\textbf{y}}}^{f}> = \sum\limits_{i=1}^{K}\textbf{x}(\sigma_{\textbf{x}}(i))\delta_{g}(i) - \textbf{x}(\sigma_{\textbf{y}}(i))\delta_{g}(i)
\end{equation}

\subsection{Proof for equation (9)}
\begin{theorem}
Given a monotone submodular function $f$ and any permutation $\sigma$, there is an inequality for $d_{\hat{f}}(\textbf{x}||\sigma)$ such that 
the equation (12) holds. That is, $d_{\hat{f}}(\textbf{x} || \sigma) \le \epsilon n\cdot (\max_{j}f(j) - \min_{j}f(j| V\backslash \{j\}))$, where $\epsilon = \max_{i,j} |x_{i} - x_{j}|$ and $f(j|A) = f(A \cup \{j\}) - f(A)$. 
\end{theorem}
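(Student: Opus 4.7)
My approach would exploit two structural properties of the vectors $h_{\sigma_{\textbf{x}}}^{f}$ and $h_{\sigma}^{f}$: (i) both telescope to the same total $f(V)-f(\emptyset)$ along their respective chains, and (ii) each of their individual entries is squeezed between a maximum singleton value and a minimum marginal gain on $V$.

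First, I would observe that for any order-based permutation $\sigma$, the definition $h_{\sigma}^{f}(\sigma(i)) = f(S_i^\sigma) - f(S_{i-1}^\sigma)$ makes $\sum_{j \in V} h_{\sigma}^{f}(j)$ a telescoping sum equal to $f(V)-f(\emptyset)$, independent of $\sigma$. Writing $\Delta(j) := h_{\sigma_{\textbf{x}}}^{f}(j) - h_{\sigma}^{f}(j)$, this gives $\sum_j \Delta(j)=0$. Consequently, for any scalar $c$,
\begin{equation*}
d_{\hat{f}}(\textbf{x}\|\sigma) \;=\; \langle \textbf{x}, \Delta\rangle \;=\; \langle \textbf{x}-c\mathbf{1},\, \Delta\rangle .
\end{equation*}
Choosing $c = \min_i \textbf{x}(i)$ makes every entry of $\textbf{x}-c\mathbf{1}$ nonnegative and bounded above by $\epsilon$.

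Next, I would bound $|\Delta(j)|$ using submodularity. For any chain and any element $j$ inserted at position $k$, the diminishing-returns property gives $f(j\mid V\setminus\{j\}) \le f(S_{k-1}\cup\{j\})-f(S_{k-1}) \le f(j)$. Applying this to both $h_{\sigma_{\textbf{x}}}^{f}(j)$ and $h_{\sigma}^{f}(j)$ yields $|\Delta(j)| \le \max_j f(j) - \min_j f(j\mid V\setminus\{j\}) =: M$.

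Finally, I would split the inner product by sign of $\Delta$. Let $P=\{j:\Delta(j)>0\}$ and $N=\{j:\Delta(j)<0\}$. On $N$, each summand $(\textbf{x}(j)-c)\Delta(j)$ is nonpositive and can be discarded from above. On $P$, each summand is at most $\epsilon\cdot\Delta(j)$, so
\begin{equation*}
d_{\hat{f}}(\textbf{x}\|\sigma) \;\le\; \epsilon\!\sum_{j\in P}\Delta(j) \;\le\; \epsilon\,|P|\,M \;\le\; \epsilon\, n\, M,
\end{equation*}
which is exactly the claimed inequality.

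The main obstacle I anticipate is justifying that the crude step $\sum_{j\in P}\Delta(j)\le |P|M \le nM$ is the right way to close the bound, since the telescoping identity $\sum_j\Delta(j)=0$ in principle allows for a tighter $\tfrac{n}{2}M$; I would note that the quoted bound tolerates this slack and is the form that gets used in Proposition~6. The other subtlety is the submodular sandwich on $h_\sigma^f(j)$: it requires care to argue that adding $j$ to the initial prefix gives a marginal gain at most $f(j)$ and at least $f(j\mid V\setminus\{j\})$, but this follows directly from monotonically shrinking marginal gains along any chain $\emptyset\subseteq S_{k-1}\subseteq V\setminus\{j\}$.
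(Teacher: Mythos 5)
Your proof is correct and follows essentially the same route as the paper's: both exploit $\langle\mathbf{1},\, h_{\sigma_{\textbf{x}}}^{f}-h_{\sigma}^{f}\rangle = f(V)-f(V)=0$ to replace $\textbf{x}$ by a shifted vector with entries in $[0,\epsilon]$, and both rest on the submodular sandwich $f(j\mid V\setminus\{j\})\le h^{f}(j)\le f(j)$ that bounds each entry difference by $M=\max_{j}f(j)-\min_{j}f(j\mid V\setminus\{j\})$. The only divergence is the closing step, where the paper applies Cauchy--Schwarz ($\|r\|_{2}\,\|h_{\sigma_{r}}^{f}-h_{\sigma}^{f}\|_{2}\le \epsilon\sqrt{n}\cdot\sqrt{n}\,M$) while you pair the $\ell_{\infty}$ bounds directly via a sign split; both give the same $\epsilon n M$.
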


\begin{proof}
Decompose $x = \min_{j} x_{j}\textbf{1} + r$, where $r_{i} = x_{i} - \min_{j}x_{j}$. Notice that $|r_{i}|\le \epsilon$. Moreover, $\sigma_{x} = \sigma_{r}$ and hence $d_{\hat{f}}(\textbf{x}||\sigma) = d_{\hat{f}}(\min_{j}x_{j}\textbf{1} || \sigma) + d_{\hat{f}}(r || \sigma) = d_{\hat{f}}(r || \sigma)$ since $<\textbf{1}, h_{\sigma_{r}}^{f} - h_{\sigma}^{f}> = f(V) - f(V) = 0$. Now, $d_{\hat{f}}(r||\sigma) = <r, h_{\sigma_{r}}^{f} - h_{\sigma}^{f}> \le ||r||_{2}||h_{\sigma_{r}}^{f} - h_{\sigma}^{f}||_{2}$. Finally, note that $||r||_{2} \le \epsilon \sqrt{n}$ and $||h_{\sigma_{r}}^{f}-h_{\sigma}^{f}||_{2} \le \sqrt{n}(\max_{j}f(j) - \min_{j}f(j|V\backslash \{j\}))$ and combining these, we get the result.
\end{proof}

\subsection{The NDCG results in Figure $1$}

The exact NDCG values associated with the plots in Figure $1$ are listed in Tables $3$ and $4$ respectively. 

\begin{table}[htbp]
\begin{center}
\label{tab:res_values_mq2007}
\begin{tabular}{|c||c|c|c|c|c|c|c|c|c|c|}
\hline
Methods	 & Top-1 & Top-2 & Top-3 & Top-4 & Top-5 & Top-6 & Top-7 & Top-8 & Top-9 & Top-10  \\
\hline
Linear-LBD	&0.3591	&0.3696	&0.3764	&0.3838	&0.3897	&0.3985	&0.4071	&0.4134	&0.4203	&0.4278  \\
\hline
Nested-LBD	&0.3784	&0.3896	&0.3975	&0.4049	&0.4121	&0.4183	&0.4264	&0.4351	&0.4419	&0.4511  \\
\hline
ULARA		&0.3625	&0.3615	&0.3669	&0.3713	&0.3763	&0.3820	&0.3866	&0.3920	&0.3983	&0.4040  \\
\hline
\hline
RankSVM		&0.4096	&0.4074	&0.4063	&0.4084	&0.4143	&0.4195	&0.4252	&0.4306	&0.4362	&0.4439  \\
\hline
ListNet		&0.4002	&0.4063	&0.4091	&0.4144	&0.4170	&0.4229	&0.4275	&0.4328	&0.4381	&0.4440  \\
\hline
AdaRank		&0.3876	&0.3967	&0.4044	&0.4067	&0.4102	&0.4156	&0.4203	&0.4258	&0.4319	&0.4369  \\
\hline
RankBoost	&0.4134	&0.4094	&0.4072	&0.4122	&0.4183	&0.4227	&0.4267	&0.4323	&0.4392	&0.4464  \\
\hline
\end{tabular}
\caption{The NDCG results on MQ2007 dataset.}
\end{center}
\end{table}

\begin{table}[htbp]
\begin{center}
\label{tab:res_values_mq2007}
\begin{tabular}{|c||c|c|c|c|c|c|c|}
\hline
Methods	 & Top-1 & Top-2 & Top-3 & Top-4 & Top-5 & Top-6 & Top-7  \\
\hline
Linear-LBD	&0.3185	&0.3696	&0.4010	&0.4282	&0.4525	&0.4713	&0.4886  \\
\hline
Nested-LBD	&0.3446	&0.3839	&0.4111	&0.4409	&0.4653	&0.4838	&0.4957  \\
\hline
ULARA		&0.3362	&0.3643	&0.3938	&0.4186	&0.4386	&0.4612	&0.4735  \\
\hline
\hline
RankSVM		&0.3770	&0.4062	&0.4286	&0.4490	&0.4644	&0.4764	&0.4815  \\
\hline
ListNet		&0.3754	&0.4112	&0.4324	&0.4568	&0.4747	&0.4894	&0.4978  \\
\hline
AdaRank		&0.3826	&0.4211	&0.4420	&0.4653	&0.4821	&0.4948	&0.4993  \\
\hline
RankBoost	&0.3856	&0.3993	&0.4288	&0.4479	&0.4666	&0.4816	&0.4898  \\
\hline
\end{tabular}
\caption{The NDCG results on MQ2008 dataset.}
\end{center}
\end{table}

\end{document}